\newtheorem{lemma}{Lemma}
\newtheorem{remark}{Remark}
\newtheorem{theorem}{Theorem}
\title{Time-Varying Formation Tracking Control of Wheeled Mobile Robots With Region Constraint: A Generalized Udwadia-Kalaba Framework
}
\author{
  Yijie Kang, Yuqing Hao , Qingyun Wang \\
  Department of Dynamics and Control \\
  Beihang University \\
  Beijing\\
 \\
   \And
  Guanrong Chen \\
  Department of Electrical Engineering \\
  City University of Hong Kong \\
  Hong Kong SAR\\
   \\
}
\begin{document}
\maketitle

\begin{abstract}
In this article, the time-varying formation tracking control of wheeled mobile robots with region constraint is investigated from a generalized Udwadia-Kalaba framework. The communication network is modeled as a directed and weighted graph that has a spanning tree with the leader being the root. By reformulating the time-varying formation tracking control objective as an equality constrained equation and transforming the region constraint by a diffeomorphism, the time-varying formation tracking controller with the region constraint is designed under the generalized Udwadia-Kalaba framework. Compared with the existing works on time-varying formation tracking control, the region constraint is taken into account in this paper, which ensures the safety of the robots. Finally, the feasibility of the proposed control strategy is illustrated through some numerical simulations.
\end{abstract}

\keywords{Time-varying formation tracking control \and Region constraint \and Weighted directed topology \and Wheeled mobile robot \and Generalized Udwadia-Kalaba framework}

\section{Introduction}

Over the past three decades, coordination control of wheeled mobile robots has generated considerable attention\cite{arai2002advances}. The coordination control of wheeled mobile robots is mostly classified into synchronization control\cite{li2013distributed,wen2012consensus,zhao2018designing,wen2013consensus}, formation control\cite{liu2023distributed,kang2025formation,xu2025distributed}, formation-containment control\cite{liu2023,zhou2023appointed,xu2023adaptive}, and so on. In particular, the formation control of wheeled mobile robots has raised increasing research interest due to its potential applications in military missions and civil engineering, such as rescue search, environmental monitoring, source allocation\cite{xia2024penalty}, target enclosing\cite{li2024fixed,yu2019cooperative,zheng2025enclosing}, and so on.

Generally, formation control can be classified as time-invariant formation control and time-varying formation control. In recent years, various approaches have been proposed to achieve time-invariant formation control\cite{sun2024distributed,qian2019halo,wang2021modular}. Notably, in the application of multiple targets' surveillance using wheeled mobile robots, the time-varying formation is important to avoid obstacles in complicated environments and realize translational, rotational and scaling formation maneuvers simultaneously. Moreover, the methodologies for the time-invariant formation control cannot be directly applied to solving time-varying formation control problems because the time-varying formation typically brings time-dependent challenges to both the analysis and design of the formation control law. To deal with various time-dependent issues, the works in \cite{wu2024formation,romero2023global,chen2024time} address the time-varying formation stabilization problem. Considering time-varying target enclosing control for example, achieving a desired time-varying formation is only the first step to enclose the time-varying target. The wheeled mobile robots are also required to move along with the dynamic target. In such scenarios, time-varying formation tracking control for wheeled mobile robots is required.

To date, extensive research has been conducted on the time-varying formation tracking control problem based on  single-integrator models\cite{xu2020affine,fang2021distributed}, double-integrator models\cite{dong2014time,dong2016time} and unicycle models\cite{brinon2014cooperative,maghenem2017formation}. For example, in \cite{fang2021distributed}, the closed-loop formation maneuver control problem is investigated with nongeneric or nonconvex nominal configurations over directed graphs in 3-D space based on the single-integrator model. In \cite{dong2016time}, a formation tracking protocol is designed for second-order multi-agent systems with switching interaction topologies and applied to solving the target enclosing problem of a multiquadrotor unmanned aerial vehicle (UAV) system. In \cite{brinon2014cooperative}, a novel cooperative
control law is developed to stabilize a fleet of vehicles for a large class of time-varying formations considering a unicycle model for the dynamics. Note that most of the above works treat the robots as particles and ignore the dynamic characteristics of the robots. 

In view of robots' underactuated characteristics, it is more practical to design time-varying formation controllers based on dynamic models. However, little research has been conducted into the time-varying formation tracking control of wheeled mobile robots based on dynamic models. In \cite{sun2021time}, a time-varying optimization-based approach is proposed to achieve distributed time-varying formation control for an uncertain Euler–Lagrange system. In \cite{yu2022adaptive}, the adaptive practical optimal time-varying formation tracking problems of the disturbed high-order multi-agent systems with a noncooperative leader are considered. However, most related works ignore region constraint.

In practice, region constraint exists widely due to the limitation of real-world environments, task requirements and safety regulations. For example, aircraft cannot fly in the no-fly zones and manipulators need to operate within strict workspace limits. Ignoring region constraint, the robots might collide with the region boundaries, compromising the safety and integrity of the robots. Thus, it is necessary to consider the region constraint during the process of designing the time-varying formation tracking controller.

Motivated by the above discussions, this paper studies the time-varying formation tracking control of wheeled mobile robots with region constraint from a generalized Udwadia-Kalaba framework. The main contributions are summarized as follows.

(1) The time-varying formation tracking control for the wheeled mobile robots is considered. Compared with \cite{wu2024formation,romero2023global,chen2024time}, which only address time-varying formation stabilization problems, the formation tracking control considered here extends the scope of control tasks.

(2) The dynamic model of the wheeled robots is considered. Compared with \cite{fang2021distributed,dong2016time,brinon2014cooperative}, which consider the unicycle models or integrator models, the general model considered here is more practical.

(3) The region constraint of the robots is considered. Compared with the most existing works on the time-varying formation tracking control \cite{maghenem2017formation,sun2021time,yu2022adaptive}, which ignore the region constraint, the time-varying formation tracking control with region constraint considered here ensures the safety of the robots therefore has higher practical values.

The reminder of this paper is structured as follows. In Section \ref{sec:2}, some preliminaries and the model description are presented. Section \ref{sec:3} introduces the generalized Udwadia-Kalaba formulation. In Section \ref{sec:4}, the time-varying formation tracking controller with region constraint is designed using the generalized Udwadia-Kalaba formulation. Simulation examples are demonstrated in Section \ref{sec:5}. Section \ref{sec:6} concludes this paper.

\section{Preliminaries and model description}
\label{sec:2}

\subsection{Notation}
$\mathbb{R}^{n}$ denotes the $n$-dimensional Euclidean space. Let the $n$-dimensional identity matrix be represented by $ {I_n} \in \mathbb{R}^{n \times n}$. Let the superscript ``$+$'' denote the Moore-Penrose generalized inverse of a matrix, and ``$\otimes$'' denote the Kronecker product. For a vector $x=[x_1,x_2,\cdots ,x_n]^T$, $diag(x)=diag\{x_1,x_2,\cdots ,x_n\}$. For a complex number $k$, $\mathcal{R}(k)$ and $\mathcal{I}(k)$ represent its real and imaginary parts, respectively. $N(A)$ denotes the null-space of matrix $A$. Let $span\{w_1,w_2,\cdots,w_n\}$ denote the linear space spanned by the vectors $w_1,w_2,\cdots,w_n$. The rank of matrix $A$ is represented by $rank(A)$. Let $\bm{1}_n \in \mathbb{R}^n$ and $\bm{0}_n \in  \mathbb{R}^n$ represent the $n$-dimensional all-ones and all-zeros column vectors, respectively.

\subsection{Graph Theory}
Consider a group of $n$ wheeled mobile robots whose communication structure is described by a weighted directed graph $\mathcal{G}=(\Omega,\Theta,\mathcal{A})$. Here, $\Omega={1,2,\ldots,n}$ denotes the set of nodes, and $\Theta\subseteq \Omega\times\Omega$ is the set of directed edges. If $(j,i) \in \Theta$, node $i$ can access information transmitted by node $j$. The weighted adjacency matrix is denoted by $\mathcal{A}=\left[ a_{ij} \right] \in \mathbb{R}^{n\times n}$, where $a_{ij}>0$ if $(j,i) \in \Theta$, otherwise $a_{ij}=0$. The Laplacian matrix is denoted by $\mathcal{L}=[l_{ij}] \in \mathbb{R}^{n\times n}$, satisfying $l_{ii}=\sum\limits_{j=1}^{n} {a_{ij}}$ and $l_{ij}=-a_{ij},i \neq  j$.

We assume that the weighted and directed graph $\mathcal{G}$ contains a directed spanning tree, which ensures the existence of a root node that has directed paths to all other nodes.

\begin{lemma}\label{lemma1}
	\cite{ren2008distributed} $Rank(\mathcal{L})=n-1$ holds if and only if the directed graph $\mathcal{G}$ contains a spanning tree.
\end{lemma}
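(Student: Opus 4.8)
The plan is to prove the two implications separately, after the elementary observation that every row of $\mathcal{L}$ sums to zero, so $\mathcal{L}\bm{1}_n=\bm{0}_n$ and hence $rank(\mathcal{L})\le n-1$ unconditionally; everything comes down to deciding whether $N(\mathcal{L})=span\{\bm{1}_n\}$ or is strictly larger.

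For the ``if'' direction (spanning tree $\Rightarrow rank(\mathcal{L})=n-1$), suppose $\mathcal{G}$ has a directed spanning tree with root $r$ and take any $x=[x_1,\dots,x_n]^T\in N(\mathcal{L})$. The $i$-th row of $\mathcal{L}x=\bm{0}_n$ reads $\sum_{j=1}^{n}a_{ij}(x_i-x_j)=0$. First I would pick an index $i^\star$ at which $x$ attains its maximum; since $a_{i^\star j}\ge 0$ and $x_{i^\star}-x_j\ge 0$ for every $j$, each summand must vanish, so $x_j=x_{i^\star}$ for every in-neighbour $j$ of $i^\star$. Walking this step backwards along a directed path from $r$ to a maximizing index (whose final edge terminates at that index) forces $x_r=\max_k x_k$; the symmetric argument at a minimizing index gives $x_r=\min_k x_k$. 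Hence $x$ is a multiple of $\bm{1}_n$, so $N(\mathcal{L})=span\{\bm{1}_n\}$ and $rank(\mathcal{L})=n-1$.

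For the ``only if'' direction I would argue by contraposition: assuming $\mathcal{G}$ has no directed spanning tree, I would exhibit two linearly independent vectors in $N(\mathcal{L})$. Contracting each strongly connected component of $\mathcal{G}$ yields the condensation, a finite directed acyclic graph, and since in a finite DAG a vertex reaches all others iff there is a unique source, the absence of a directed spanning tree is equivalent to the condensation having at least two \emph{source} components $C_1\neq C_2$ (components receiving no edge from outside). Since $a_{ij}=0$ whenever $i\in C_k$ and $j\notin C_k$, the principal submatrix of $\mathcal{L}$ indexed by $C_k$ is exactly the Laplacian of the strongly connected subgraph induced on $C_k$; by the ``if'' part, any $x$ with $\mathcal{L}x=\bm{0}_n$ is constant on each $C_k$, and I would show these constants can be prescribed arbitrarily and then extended uniquely to the remaining components. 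Indeed, processing the non-source components in topological order, on such a component $C$ the restricted equations take the form $M_C x_C=b_C$, where $b_C$ is fixed by previously determined values and $M_C$ is the induced Laplacian of $C$ plus a nonnegative diagonal term coming from the in-edges of $C$ from earlier components; because $C$ is not a source this correction is nonzero on at least one row, and $C$ being strongly connected makes $M_C$ an irreducibly diagonally dominant, hence nonsingular, M-matrix, so $x_C$ is uniquely determined. This makes $x\mapsto(\text{values of }x\text{ on the source components})$ a linear isomorphism from $N(\mathcal{L})$ onto $\mathbb{R}^{(\#\text{ source components})}$, so $\dim N(\mathcal{L})$ equals the number of source components, which is $\ge 2$; thus $rank(\mathcal{L})\le n-2<n-1$.

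The main obstacle is this last step — converting ``at least two source components'' into ``at least two independent null vectors,'' i.e. the solvability/uniqueness claim on the condensation (the nonsingular-M-matrix argument). The ``if'' direction is routine once one is careful about the edge-direction convention, but the ``only if'' direction needs either the M-matrix estimate above or, alternatively, the directed matrix-tree theorem to express the principal cofactors of $\mathcal{L}$ as sums of weights of spanning in-trees; for a journal paper I would most likely state this combinatorial fact with a reference to \cite{ren2008distributed} and keep the argument terse.
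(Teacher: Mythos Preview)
Your proof is correct in both directions: the maximum-principle argument for the ``if'' part and the condensation/M-matrix argument for the ``only if'' part are standard and carefully executed, including the right handling of the edge-direction convention ($a_{ij}>0$ meaning $j\to i$) and the irreducible-diagonal-dominance step that makes $M_C$ invertible on each non-source component.

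However, there is nothing to compare against: the paper does \emph{not} prove Lemma~\ref{lemma1}. It is stated with the citation \cite{ren2008distributed} and used as a black box (only the consequence $N(\bar{\mathcal{L}})=span\{\bm{1}_{n+1}\}$ is invoked, at the end of the proof of Theorem~\ref{th1}). Your closing remark already anticipates this --- in a paper of this type the lemma is quoted, not reproved --- so your detailed argument goes well beyond what the authors supply. If you want to align with the paper, simply cite the result; if you want a self-contained write-up, what you have is fine.
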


\subsection{The Model of A Wheeled Mobile Robot}
Consider a four-wheeled mobile robot depicted in Figure \ref{fig_1}. The generalized coordinates vector is defined as $q=(x,y,\theta)^T$, where $(x,y)$ represents the position of the robot’s mass center and $\theta$ is the heading angle. The dynamic model of the robot can be expressed as follows\cite{sun2016application}:
\begin{equation}\label{e7}
	M(q,t) \ddot{q} = F(q,\dot{q},t) + F^{c}(q,\dot{q},t),
\end{equation}
where
\begin{equation*}
	\begin{aligned}
		&M(q,t) = \begin{bmatrix}
			md\cos \theta & md\sin \theta & 0 \\
			0 & 0 & \frac{Jd}{l} \\
			\sin \theta & -\cos \theta & 0 \\
		\end{bmatrix}, \\
		&F(q,\dot{q},t) = \begin{bmatrix}
			md\sin \theta \dot{\theta}\dot{x} - md\cos \theta \dot{\theta}\dot{y} \\
			0 \\
			-\cos \theta \dot{\theta}\dot{x} - \sin \theta \dot{\theta}\dot{y} \\
		\end{bmatrix}, \\
		&F^{c}(q,\dot{q},t) = PU = \begin{bmatrix}
			1 & 1 \\
			1 & -1 \\
			0 & 0 \\
		\end{bmatrix} \begin{bmatrix}
			u_r \\
			u_l \\
		\end{bmatrix}.
	\end{aligned}
\end{equation*}
in which $m$ and $J$ are the mass and the rotational inertia of the mobile robot, respectively. The parameter $l$ is the distance from the wheel axis to the robot center, and $d$ is the wheel radius. Moreover, $u_l$ and $u_r$ represent the driving torques of the motor on the rear wheel.

\begin{figure}[t]
\centerline{\includegraphics[width=0.5\textwidth]{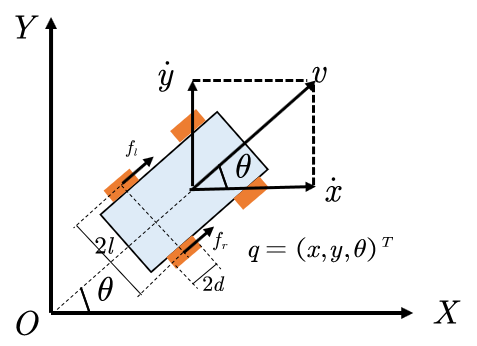}}
	\caption{Model of four-wheeled mobile robot.}
	\label{fig_1}
\end{figure}

\section{Generalized Udwadia-Kalaba formulation}
\label{sec:3}
The Udwadia–Kalaba (U-K) formulation was developed within analytical mechanics for modeling and controlling complex mechanical systems\cite{udwadia2002foundations}. It can be used to obtain analytic solutions of the constraint forces for the mechanical systems subject to holonomic or nonholonomic constraints. To date, the U-K formulation has been successfully employed in a wide range of control applications, such as trajectory tracking\cite{yu2016novel}, synchronization control\cite{wang2021synchronization}, formation control \cite{kang2025formation}, and so on. However, the traditional U-K formulation cannot be used to solve an inequality constraint problem. In \cite{zhang2023novel}, a generalized Udwadia-Kalaba (GUK) formulation is proposed to model and control a mechanical system incorporating both the equality and inequality constraints, which are introduced  below.

\subsection{Equality Constraints}
Consider the following unconstrained mechanical system:
\begin{equation}\label{e2}
	{M}\left( {q,}t \right) {\ddot{q}}={F}\left( {q,\dot{q},}t \right),
\end{equation}
where $q \in \mathbb{R}^N$ is the position, $\dot{q} \in \mathbb{R}^N$ is the velocity, $\ddot{q} \in \mathbb{R}^N$ is the acceleration, ${M}\left( {q,}t \right) \in \mathbb{R}^{N\times N}$ denotes the mass matrix, and the external force vector acting on the system is denoted as ${F}\left( {q,\dot{q},}t \right) \in \mathbb{R}^N$.\\ 
It follows from \eqref{e2} that the acceleration of the unconstrained mechanical system is
\begin{equation}\label{e88}
	a(q,\dot{q},t)=\ddot{q}=M^{-1}(q,t)F(q,\dot{q},t).
\end{equation}

Now, suppose that system \eqref{e2} is subjected to the following equality constraints:
\begin{equation}\label{e3}
	{\Psi}\left( {q,\dot{q},}t \right) =\left[ {\psi}_1, {\psi}_2,\cdots ,{\psi}_{{s}} \right] ^T=\bm{0}.
\end{equation}
It is assumed that the equality constraints are sufficiently smooth and thus differentiable with respect to time $t$ to the required order. Differentiating \eqref{e3} with respect to $t$ yields
\begin{equation}\label{e4}
	{A}\left( {q,\dot{q},}t \right) {\ddot{q}}={b}\left( {q,\dot{q},}t \right) ,
\end{equation}
where ${A}\left( {q,\dot{q},}t \right) \in \mathbb{R}^{s\times N},{b}\left( {q,\dot{q},}t \right) \in \mathbb{R}^s.$ 

Consequently, the constrained dynamics can be described as
\begin{equation}\label{e5}
	{M\ddot{q}}={F}\left( {q,\dot{q},}t \right) +{F}^{{c,e}}\left( {q,\dot{q},}t \right) ,
\end{equation}
where
${F}^{{c,e}}={N}^{-\frac{1}{2}}\left( {AM}^{-1}{N}^{-\frac{1}{2}} \right) ^+\left( {b}-{AM}^{-1}{F} \right)$ is the equality constraint force derived from \eqref{e3}.

\begin{remark}
In this paper, $N$ is chosen to be $M^{-2}$ for computational simplicity.
\end{remark}

To this end, the dynamics of the mechanical system under constraints can be written as
\begin{equation}
	M\ddot{q}=F+M^{\frac{1}{2}}(AM^{-\frac{1}{2}})^+(b-AM^{-1}F).
\end{equation}
Alternatively, the acceleration of the constrained mechanical system is denoted as $\ddot{q}=a+a^{c,e}$, where 
\begin{equation}\label{e87}
	a^{c,e}=M^{-\frac{1}{2}}(AM^{-\frac{1}{2}})^{+}(b-AM^{-1}F).
\end{equation}
is the additional acceleration resulted from \eqref{e3}.

In the GUK formulation, the control objectives are transformed to be equality constraints, thus the equality constraint force becomes the control force. Since $F^{c,e}$ is dictated by the imposed motion constraints, the robots’ initial conditions are required to satisfy \eqref{e3}. In practice, however, this requirement may not hold. To address this issue, the stabilization approach in \cite{baumgarte1972stabilization} is adopted, under which the equality constraint \eqref{e3} can be reformulated as
\begin{equation}\label{e6}
	{\ddot{\Psi}}+\alpha {\dot{\Psi}}+\beta {\Psi}=\bm{0},
\end{equation}
where $\alpha >0,\beta >0$. It is noted that ${\Psi}=\bm{0}$ is the asymptotically stable equilibrium point of system \eqref{e6}.

\subsection{Inequality Constraints}

Suppose that the mechanical system \eqref{e2} is subject to the equality constraints \eqref{e3} and the following inequality constraints:
\begin{equation}\label{e84}
	a_i<g_i(q)<b_i,i=1,2,\dots ,l.
\end{equation}
where $a_i$ and $b_i$ are either finite constants or $\pm \infty$.

Then, the acceleration of the constrained mechanical system  subject to equality constraints and inequality constraints is represented as 
\begin{equation}\label{e60}
	\ddot{q}=a+a^{c,e}+a^{c,i}, 
\end{equation}
where $a^{c,i}$ is the additional acceleration resulted from the inequality constraints \eqref{e84}.

Let $a^{c,i}$ be in the form of 
\begin{equation}\label{e85}
	a^{c,i}=({I}-A^{+}A)r,
\end{equation}
where $r \in \mathbb{R}^N$ is to be designed. Multiplying both sides of $\ddot{q}=a+a^{c,e}+a^{c,i}$ by $A$, it follows that
\begin{equation}
	A\ddot{q}=A(a+a^{c,e}+a^{c,i})=b+Aa^{c,i},
\end{equation}
Noting that $AA^{+}A=0$, the above can be rewritten as
\begin{equation}
	A\ddot{q}=b+Aa^{c,i}=b+A[({I}-A^{+}A)r]=b.
\end{equation}

\begin{remark}
	In order to deal with the equality and inequality constraints separately, $a^{c,i}$ should be in the null space of $A$ with the form of \eqref{e85} while the equality constraints are preempted in the range space of $A$ with the form of $A\ddot{q}=b$. However, the equality and inequality constraints cannot be addressed separately if there is no intersection between the analytic domain of the inequality constraints and $N(A)$, that is, the equality constraints will be violated due to the existence of the inequality constraints.
\end{remark}

Next, let $\hat{\phi}_i(\cdot):(a_i,b_i) \rightarrow \mathbb{R},i=1,2,\cdots ,l$, be a diffeomorphism such that,  with $\xi_i=\hat{\phi}_i(g_i(q))=:\phi_i(q)$,
\begin{equation}
	a_i<g_i(q)<b_i \Leftrightarrow -\infty<\xi_i<\infty
\end{equation}
It is noted that, with a suitable diffeomorphism, \eqref{e84} can be satisfied if and only if $\xi_i$ is bounded.

Let $\xi=[\xi_1,\xi_2,\cdots,\xi_l]^T$,$\phi=[\phi_1,\phi_2,\cdots,\phi_l]^T$. By differentiating it with respect to $t$, one obtains 
\begin{equation}
	\dot{\xi}=\frac{\partial\phi}{\partial q}\dot{q},
\end{equation}
\begin{equation}\label{e82}
	\ddot{\xi}=\frac{d}{dt} \left( \frac{\partial\phi}{\partial q}\dot{q} \right)=\left[\frac{d}{dt}\left(\frac{\partial \phi}{\partial q}\right)\right]\dot{q}+\left(\frac{\partial \phi}{\partial q}\right)\ddot{q}
\end{equation}
Applying \eqref{e60}, \eqref{e82} can be rewritten as
\begin{equation}\label{e81}
	\begin{aligned}
		\ddot{\xi}&=\underbrace{\left[\frac{d}{dt}\left(\frac{\partial \phi}{\partial q}\right)\right]\dot{q}+\left(\frac{\partial \phi}{\partial q}\right)(a+M^{-\frac{1}{2}}(AM^{-\frac{1}{2}})^+(b-Aa))}_{p_1}+
		\underbrace{\left(\frac{\partial \phi}{\partial q}\right)(I-A^{+}A)}_{p_2}r\\
		&=p_1(q,\dot{q},t)+p_2(q,\dot{q},t)r.
	\end{aligned}
\end{equation}

\begin{remark}
	It is obvious from \eqref{e81} that, by designing $r$, the boundedness of $\xi$ is satisfied, which in turn enforces the satisfaction of the inequality constraints.
\end{remark}

Therefore, consider the mechanical system \eqref{e2} subject to equality constraints \eqref{e3} and inequality constraints \eqref{e84}. The constraint force is $F^{c}=F^{c,e}+F^{c,i}$, where
\begin{equation}
	\begin{cases}
		F^{c,e}=M^{\frac{1}{2}}(AM^{-\frac{1}{2}})^+(b-AM^{-1}F), \\
		F^{c,i}=M(I-A^+A)r. & 
	\end{cases}
\end{equation}
Then, the generalized Udwadia-Kalaba formulation is 
\begin{equation}
	M\ddot{q}{=}F{+}\overbrace{\underbrace{M^{\frac{1}{2}}(AM^{-\frac{1}{2}})^+(b{-}AM^{-1}F)}_{equality \ constraint}{+}\underbrace{M(I{-}A^+A)r.}_{inequality\ constraint}}^{constraint\ force}
\end{equation}

\section{Time-varying formation tracking control with region constraint}\label{sec:4}
Consider $n$ wheeled robots numbered from 1 to $n$. The generalized coordinate vector of robot $i$ is defined as $q_i=(x_i,y_i,\theta_i)^T$. The communication topology among $n$ robots is indicated by a weighted and directed graph $\mathcal{G}({\Omega},{\Theta},\mathcal{A})$. The Laplacian matrix corresponding to $\mathcal{G}$ is denoted as $\mathcal{L}$.

Let node 0 denote the leader, whose generalized coordinates are $q_0=[x_0,y_0,\theta_0]^T$. Let ${B}=\left[ b_{10},b_{20},\cdots ,b_{n0} \right] ^T$ represent the communication between the leader and the $n$ robots, where $b_{i0}>0$ if the robot $i$ can obtain information from the leader, otherwise $b_{i0}=0$. Let $\bar{B}{=}diag\left({B}\right) $. 

Further, the communication topology of the whole system is represented by $\bar{\mathcal{G}}=\left( \bar{\Omega} ,\bar{\Theta},\bar{\mathcal{A}} \right) $, where $\{\bar{\Omega} \}=\{0\}\cup {\Omega} =\{0\}\cup \{1,2,\cdots ,n\},\bar{\Theta} \subseteq \bar{\Omega} \times \bar{\Omega} $. It is assumed that $\bar{\mathcal{G}}$ has a directed spanning tree with the leader being the root. The Laplacian matrix corresponding to $\bar{\mathcal{G}}$ is denoted as $\bar{\mathcal{L}}$, and $\lambda_i$ is the $i$-th  eigenvalue of $\bar{\mathcal{L}}$. It is easy to verify that
\begin{equation}\label{e34}
	\bar{\mathcal{L}}=\left[ \begin{matrix}
		0&		\bm{0}_{1\times n}\\
		-{B}&		\mathcal{L}+\bar{B}\\
	\end{matrix} \right].
\end{equation}

Based on \eqref{e7}, the dynamics of the overall constrained system can be expressed as
\begin{equation}\label{e35}
	{M\ddot{{q}}}={{F}}+{{F}}^{c},
\end{equation}
where ${{q}}=\left[ {q}_{0}^{T},{q}_{1}^{T},\cdots ,{q}_{{n}}^{T} \right] ^T,{{M}}=diag\{{M}_0,\cdots,{M}_{{n}}\}, F=[F_0^T,F_1^T, \cdots ,F_n^T]^T, F^{c}{=}[F_0^{cT}, F_1^{cT},\cdots,F_n^{cT}]^T$ is the control force, which is designed to drive the robots to achieve the time-varying formation tracking control with region constraint.

\subsection{Time-Varying Formation Tracking Controller Design}
In this subsection, the time-varying formation tracking control objective is transformed to an equality constrained equation. The corresponding equality constraint force $F^{c,e}$ will be designed below.

The time-varying formation tracking control objective is to drive all robots to form the desired time-varying formation while moving along with the leader. 

Let ${{h(t)}}{=}\left[ {h}_{0}^{T}(t),{h}_{1}^{T}(t),\cdots ,{h}_{n}^{T}(t) \right] ^T {\in} \mathbb{R}^{3(n+1)}$ indicate the desired time-varying formation, where ${h}_0(t)=\bm{0}$. It is assumed that the desired time-varying formation is continuously differentiable.

\begin{remark}
	Compared with \cite{kang2025formation,sun2024distributed,wang2021modular}, which consider the time-invariant formation control, the desired formation considered here is time-varying. The wheeled mobile robots can realize translational, rotational and scaling formation maneuvers simultaneously, which is more practical.
\end{remark}

Here, the control objective is 
\begin{equation}\label{e36}
	\begin{aligned}
		\lim_{t\rightarrow \infty}\left( {q}_i\left( t \right) -{q}_0\left( t \right) \right)=h_i(t)-h_0(t), i \in {\Omega} ,
	\end{aligned}
\end{equation}
Considering the fact that $(q_i(t)-q_0(t))-(q_j(t)-q_0(t))=q_i(t)-q_j(t)=(h_i(t)-h_0(t))-(h_j(t)-h_0(t))=h_i(t)-h_j(t)$, \eqref{e36} can be reformulated as
\begin{equation}
	\lim_{t\rightarrow \infty}a_{ij}\left( {q}_i\left( t \right) -{q}_j\left( t \right) \right)=a_{ij}(h_i(t)-h_j(t)), i,j \in \bar{\Omega}.
\end{equation}
It can be further rewritten as 
\begin{equation}\label{e37}
	\left( \bar{\mathcal{L}}\otimes {I_3} \right) \left( {{q}}-{{h}} \right)  =\bm{0}.
\end{equation}

Applying Baumgarte modification, \eqref{e37} can be further modified as 
\begin{equation}\label{e38}
	\begin{aligned}
		&(\bar{\mathcal{L}}{\otimes}{I_3}){\ddot{{q}}}{=}(\bar{\mathcal{L}}\otimes I_3)\ddot{h}{-}\alpha[(\bar{\mathcal{L}}+I_{n+1}){\otimes}I_3](\bar{\mathcal{L}}{\otimes}I_3)(\dot{q}-\dot{h}){-}
		\beta[(\bar{\mathcal{L}}{+}I_{n+1}){\otimes}I_3](\bar{\mathcal{L}}{\otimes}I_3)({q}{-}h),\\
	\end{aligned}
\end{equation}
where $\alpha>0,\beta>0$.
Following from \eqref{e6}, it obtains that
\begin{equation}\label{e39}
	\begin{aligned}
		&{F}^{c,e}{=}{M} ( \bar{\mathcal{L}} {\otimes}{I_3}) ^{+}\{(\bar{\mathcal{L}}\otimes I_3)\ddot{h}{-}
		\alpha[(\bar{\mathcal{L}}+I_{n+1}){\otimes}I_3](\bar{\mathcal{L}}{\otimes}I_3)(\dot{q}-\dot{h}){-}
		\beta[(\bar{\mathcal{L}}{+}I_{n+1}){\otimes}I_3](\bar{\mathcal{L}}{\otimes}I_3)({q}{-}h){-}(\bar{\mathcal{L}}\otimes I_3)M^{-1}F\}.\\
	\end{aligned}
\end{equation}
It is easy to verify that
\begin{equation}\label{e80}
	\begin{aligned}
		F^{c,e}{=}&M(\bar{\mathcal{L}}\otimes I_3)^{+}(\bar{\mathcal{L}}\otimes I_3)[\ddot{h}{-}\alpha(\dot{q}{-}\dot{h}){-}\beta(q{-}h){-}M^{-1}F]
		{+}M(\bar{\mathcal{L}}\otimes I_3)^{+}(\bar{\mathcal{L}}\otimes I_3)(\bar{\mathcal{L}}\otimes I_3)[{-}\alpha(\dot{q}{-}\dot{h}){-}\beta(q{-}h)].
	\end{aligned}
\end{equation}

Let $\eta_j=\sum\limits_{i=1}^{n+1}l_{ij},j=1,2,\cdots,n+1$, and $\eta=(\eta_1,\eta_2,\cdots ,\eta_{n+1})^T \in \mathbb{R}^{n+1}$. From the properties of $\bar{\mathcal{L}}$, it can be obtained that
\begin{equation}\label{e40}
	\begin{array}{l}
		\bar{\mathcal{L}}^{+}\bar{\mathcal{L}}=I_{n+1}-Q_{n+1},\\
		\bar{\mathcal{L}}^{+}\bar{\mathcal{L}}^2=\bar{\mathcal{L}}-\frac{1}{n+1}(\bm{1}_{n+1}\otimes \eta^T).\\
	\end{array}
\end{equation}
where ${Q}_{n+1}=\frac{1}{n+1}\bm{1}_{n+1}\bm{1}_{n+1}^T$ . Combining \eqref{e80} and \eqref{e40}, it can be obtained that
\begin{equation}\label{e101}
	\begin{aligned}
		F^{c,e}&{=}M[(I_{n+1}{-}Q_{n+1}){\otimes}I_3][\ddot{h}{-}\alpha(\dot{q}{-}\dot{h}){-}\beta(q{-}h){-}
		M^{-1}F]{+}M(\bar{L}{\otimes}I_3)[{-}\alpha(\dot{q}{-}\dot{h}){-}\beta(q{-}h)]{-}\\
		&\frac{M}{n+1}[(\bm{1}_{n+1}\otimes \eta^T)\otimes I_3][{-}\alpha(\dot{q}{-}\dot{h}){-}\beta({q{-}h})].
	\end{aligned}
\end{equation}
Following from \eqref{e7}, the driving torque $U^{c,e}$ can be calculated as
\begin{equation}
	U^{c,e}=(P \otimes I_{n+1})^{+}F^{c,e}.
\end{equation}

\begin{theorem}\label{th1}
	Consider the wheeled mobile robots modeled as \eqref{e35}. With the control protocol \eqref{e101}, where\\ $\frac{\alpha^2}{\beta}{>}{\max\limits_{1\leq i\leq n+1}}\frac{\mathcal{I}^{2}(\lambda_{i})}{(\mathcal{R}(\lambda_{i})+1)\mathcal{I}^{2}(\lambda_{i})+(\mathcal{R}(\lambda_{i})+1)^3}$ and $\alpha>0,\beta>0$, the time-varying formation tracking control of wheeled mobile robots \eqref{e37} can be achieved.
\end{theorem}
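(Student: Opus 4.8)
The plan is to reduce the closed loop to a linear time--invariant second--order system in the formation--tracking error, and then reduce its asymptotic stability to a finite family of scalar complex--coefficient quadratics, one per eigenvalue of $\bar{\mathcal{L}}$. First I would set $\Psi:=(\bar{\mathcal{L}}\otimes I_3)(q-h)$, so that the objective \eqref{e37} is precisely $\Psi(t)\to\bm{0}$. Substituting the control law \eqref{e101} into \eqref{e35} and using the identities \eqref{e40}, together with the observation that every bracketed term in \eqref{e80} factors through $\bar{\mathcal{L}}\otimes I_3$ (it lies in the range of $\bar{\mathcal{L}}\otimes I_3$, because $\bar{\mathcal{L}}^2+\bar{\mathcal{L}}=\bar{\mathcal{L}}(\bar{\mathcal{L}}+I_{n+1})$, and there $(\bar{\mathcal{L}}\otimes I_3)(\bar{\mathcal{L}}\otimes I_3)^{+}$ acts as the identity), the $F$--dependent terms cancel and one recovers \eqref{e38}. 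Differentiating $\Psi$ twice then collapses the closed loop to the homogeneous error dynamics
\[
	\ddot{\Psi}+\alpha\,[(\bar{\mathcal{L}}+I_{n+1})\otimes I_3]\,\dot{\Psi}+\beta\,[(\bar{\mathcal{L}}+I_{n+1})\otimes I_3]\,\Psi=\bm{0},
\]
which is unchanged by any additional null--space term of the form $M(I-(\bar{\mathcal{L}}\otimes I_3)^{+}(\bar{\mathcal{L}}\otimes I_3))r$ (it drops out after left--multiplication by $\bar{\mathcal{L}}\otimes I_3$), so the later region--constraint correction does not affect this analysis.

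Next I would analyze stability of this LTI system. In first--order form with state $[\Psi^{T},\dot{\Psi}^{T}]^{T}$ and $C:=(\bar{\mathcal{L}}+I_{n+1})\otimes I_3$, the system matrix has characteristic polynomial $\det(s^{2}I+(\alpha s+\beta)C)$, and a Schur (or Jordan) triangularization of $\bar{\mathcal{L}}$ gives $\det(s^{2}I+(\alpha s+\beta)C)=\prod_{i}(s^{2}+\alpha\mu_i s+\beta\mu_i)^{3}$ with $\mu_i=\lambda_i+1$; hence the loop is Hurwitz iff each quadratic $s^{2}+\alpha\mu_i s+\beta\mu_i$ has both roots in the open left half plane. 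Since every Laplacian eigenvalue satisfies $\mathcal{R}(\lambda_i)\ge 0$ (Gershgorin), one has $\mathcal{R}(\mu_i)=\mathcal{R}(\lambda_i)+1\ge 1>0$. The core step is the complex--coefficient stability test for $s^{2}+\alpha\mu s+\beta\mu$ with $\mu=\mu_R+i\mu_I$, $\mu_R>0$: substituting $s=iy$ and eliminating $y$ between the real and imaginary parts shows that an imaginary--axis root occurs exactly when $\alpha^{2}\mu_R(\mu_I^{2}+\mu_R^{2})=\beta\mu_I^{2}$; a root--continuity argument (the set $\{\alpha,\beta>0:\alpha^{2}\mu_R(\mu_I^{2}+\mu_R^{2})>\beta\mu_I^{2}\}$ is connected, no root crosses the imaginary axis inside it, and for small $\beta>0$ the two roots sit near $-\alpha\mu$ and near $-\beta/\alpha$, both in the open left half plane) then certifies that this whole open set is Hurwitz. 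Writing $\mu_R=\mathcal{R}(\lambda_i)+1$, $\mu_I=\mathcal{I}(\lambda_i)$ and intersecting over $i$ reproduces exactly the hypothesis $\frac{\alpha^{2}}{\beta}>\max_{1\le i\le n+1}\frac{\mathcal{I}^{2}(\lambda_i)}{(\mathcal{R}(\lambda_i)+1)\mathcal{I}^{2}(\lambda_i)+(\mathcal{R}(\lambda_i)+1)^{3}}$, so $\mathcal{A}$ is Hurwitz and $\Psi(t)\to\bm{0}$ (exponentially) for every initial condition.

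It remains to translate $\Psi\to\bm{0}$ into the control objective \eqref{e36}. By Lemma \ref{lemma1} and the spanning--tree assumption, $rank(\bar{\mathcal{L}})=n$ and $N(\bar{\mathcal{L}})=span\{\bm{1}_{n+1}\}$, so $\bar{\mathcal{L}}\otimes I_3$ restricts to an isomorphism off $span\{\bm{1}_{n+1}\}\otimes\mathbb{R}^{3}$; hence $\Psi\to\bm{0}$ forces $q-h-(\bm{1}_{n+1}\otimes c(t))\to\bm{0}$ for some $c(t)\in\mathbb{R}^{3}$, and reading off the leader block (where $h_0\equiv\bm{0}$) gives $q_0(t)-c(t)\to\bm{0}$, whence $(q_i-q_0)-(h_i-h_0)\to\bm{0}$, i.e.\ \eqref{e36} and \eqref{e37} hold.

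I expect the \emph{main obstacle} to be the complex--coefficient stability test: because $\bar{\mathcal{L}}$ is an unsymmetric directed Laplacian, the quadratics $s^{2}+\alpha\mu_i s+\beta\mu_i$ genuinely have complex coefficients, so the usual real Routh--Hurwitz conditions do not apply; one must carry the imaginary--axis--crossing computation out carefully, verify it collapses to precisely the stated rational function of $\mathcal{R}(\lambda_i)$ and $\mathcal{I}(\lambda_i)$, and confirm (via continuity of the roots) that the strict inequality is the correct sufficient direction. A secondary technical point is justifying the collapse of the pseudo--inverse expression \eqref{e101} to the clean error dynamics when $\bar{\mathcal{L}}$ is not diagonalizable, which the triangularization argument handles.
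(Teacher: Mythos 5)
Your proposal is correct and its skeleton coincides with the paper's: reduce the closed loop to the error system $\ddot{e}+\alpha[(\bar{\mathcal{L}}+I_{n+1})\otimes I_3]\dot{e}+\beta[(\bar{\mathcal{L}}+I_{n+1})\otimes I_3]e=\bm{0}$ with $e=(\bar{\mathcal{L}}\otimes I_3)(q-h)$, factor the characteristic polynomial into the quadratics $s^2+\alpha(\lambda_i+1)s+\beta(\lambda_i+1)$, derive the parameter condition per eigenvalue, and close with the null-space argument $N(\bar{\mathcal{L}})=span\{\bm{1}_{n+1}\}$. The one place where you genuinely diverge is the Hurwitz test for the complex-coefficient quadratic. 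The paper computes the two roots explicitly, writes the discriminant's square root as $k+m\bm{i}$, derives a quartic for $k^2$, and imposes $k_1^2<\alpha^2(\mathcal{R}(\lambda_i)+1)^2$ on the larger root of that quartic. You instead locate the stability boundary by substituting $s=\bm{i}y$, eliminating $y$ to get $\alpha^2\mu_R(\mu_I^2+\mu_R^2)=\beta\mu_I^2$, and then invoking root continuity on the connected open set where the strict inequality holds, anchored at the small-$\beta$ limit where the roots sit near $-\alpha\mu$ and $-\beta/\alpha$. I verified that your boundary computation reproduces exactly the threshold $\frac{\alpha^2}{\beta}=\frac{\mathcal{I}^2(\lambda_i)}{(\mathcal{R}(\lambda_i)+1)\mathcal{I}^2(\lambda_i)+(\mathcal{R}(\lambda_i)+1)^3}$ in the theorem statement. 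Your route (essentially a D-decomposition argument) is shorter, avoids the quartic manipulation, and makes the sufficiency direction of the strict inequality transparent; the paper's route has the advantage of exhibiting the root locations explicitly, which it later reuses when discussing convergence rates. Two minor points in your favor: your handling of $e\to\bm{0}\Rightarrow q_i-h_i-(q_0-h_0)\to\bm{0}$ via the restriction of $\bar{\mathcal{L}}\otimes I_3$ to a complement of its kernel is more careful than the paper's one-line assertion, and your remark that the determinant factorization survives non-diagonalizable $\bar{\mathcal{L}}$ via Schur triangularization fills a gap the paper leaves implicit.
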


\begin{proof}
	By substituting \eqref{e80} into \eqref{e35}, it yields that
	\begin{equation}\label{e19}
		\begin{aligned}
			&{M\ddot{q}}{=}{F}{+}M(\bar{\mathcal{L}}{\otimes}I_3)^{+}(\bar{\mathcal{L}}{\otimes}I_3)[\ddot{h}{-}\alpha(\dot{q}{-}\dot{h}){-}\beta(q{-}h){-}{M}^{-1}F]
			{+}M(\bar{\mathcal{L}}\otimes I_3)^{+}(\bar{\mathcal{L}}\otimes I_3)(\bar{\mathcal{L}}\otimes I_3)[{-}\alpha(\dot{q}{-}\dot{h}){-}\beta(q{-}h)].
		\end{aligned}
	\end{equation}
	By multiplying \eqref{e19} on the left by $(\bar{\mathcal{L}} \otimes {I_3})M^{-1}$, one gets
	\begin{equation}\label{e20}
		\begin{aligned}
			(\bar{\mathcal{L}}{\otimes}{I_3})\ddot{q}&{=}(\bar{\mathcal{L}}\otimes I_3)M^{-1}F{+}(\bar{\mathcal{L}} {\otimes}{I_3})[\ddot{h}{-}\alpha(\dot{q}{-}\dot{h}){-}\beta(q{-}h)
			{-}M^{-1}F]{+}(\bar{\mathcal{L}}\otimes I_3)(\bar{\mathcal{L}}\otimes I_3)[{-}\alpha(\dot{q}{-}\dot{h}){-}\beta(q{-}h)]\\
			&=(\bar{\mathcal{L}}\otimes I_3)\ddot{h}{-}\alpha[(\bar{\mathcal{L}}{+}I_{n+1})\otimes I_3](\bar{\mathcal{L}}\otimes I_3)(\dot{q}{-}\dot{h}){-}
			\beta[(\bar{\mathcal{L}}{+}I_{n+1})\otimes I_3](\bar{\mathcal{L}}\otimes I_3)(q{-}h).
		\end{aligned}
	\end{equation}
	Define the time-varying formation tracking error of the $i$th robot as
	\begin{equation}\label{e21}
		{e}_i\left( t \right){=}\sum_{j=0}^n{a_{i(j+1)}}\left[ \left( {q}_i\left( t \right) {-}{q}_j\left( t \right) \right) {-}\left( {h}_i(t) {-}{h}_j(t) \right) \right] ,i\in {\Omega}.
	\end{equation}
	Then, the formation tracking control error of the whole network is
	\begin{equation}\label{e22}
		{e}=\left[\bm{0}_3^T,{e}_{1}^{T},{e}_{2}^{T},...,{e}_{n}^{T} \right] ^T=\left( \bar{\mathcal{L}}\otimes {I}_3\right) \left( {q}-{h} \right). 
	\end{equation}
	Combining \eqref{e20} with \eqref{e22}, it gives that
	\begin{equation}\label{e23}
		{\ddot{e}}=-\alpha[(\bar{\mathcal{L}}+I_{n+1})\otimes I_3]{\dot{e}}-\beta[(\bar{\mathcal{L}}+I_{n+1})\otimes I_3] {e.}
	\end{equation}
	Then, \eqref{e23} can be indicated as 
	\begin{equation}\label{e24}
		\begin{bmatrix}\dot{e}\\\ddot{e}\end{bmatrix}=\begin{bmatrix}\mathbf{0}_{n+1}&{I}_{n+1}\\-\beta (\bar{\mathcal{L}}+I_{n+1})&-\alpha(\bar{\mathcal{L}}+I_{n+1})   \end{bmatrix}\begin{bmatrix}e\\\dot{e}\end{bmatrix}.
	\end{equation}
	The characteristic equation corresponding to \eqref{e24} is
	\begin{equation}\label{e78}
		\begin{aligned}
			&det\begin{bmatrix}s I_{n+1}&{-}I_{n+1}\\\beta(\bar{\mathcal{L}}{+}I_{n+1})&  sI_{n+1}{+}\alpha(\bar{\mathcal{L}}{+}I_{n+1})\end{bmatrix}{=}\prod\limits_{i=1}^{n+1} s^2{+}\alpha(\lambda_i{+}1)s
			{+}\beta(\lambda_i{+}1)=0,
		\end{aligned}
	\end{equation}
	where $\lambda_1,\lambda_2,\cdots,\lambda_{n+1}$ represent the eigenvalues of $\bar{\mathcal{L}}$.
	
	By calculation, the characteristic roots are
	\begin{equation}\label{e25}
		\begin{gathered}
			s_{i1}{=}\frac{1}{2}\left[-\alpha(\lambda_{i}+1)-\sqrt{\alpha^{2}(\lambda_{i}+1)^{2}-4\beta (\lambda_{i}+1)}\right], \\
			s_{i2}{=}\frac{1}{2}\left[-\alpha(\lambda_i+1)+\sqrt{\alpha^2(\lambda_i+1)^2-4\beta (\lambda_i+1)}\right]. 
		\end{gathered}
	\end{equation}
	
	Since the communication topology $\bar{\mathcal{G}}$ having a directed spanning tree with the leader being the root, it can be verified that $\mathcal{R}(\lambda_i)\geq 0, i=1,2,\cdots,n+1$. It is clear that $\lim\limits_{t \rightarrow \infty}e=\bm{0}$ if and only if $\mathcal{R}(s_{ij})<0,i=1,2,\cdots,n+1;j=1,2$. Let 
	\begin{equation}\label{e26}
		\sqrt{\alpha^{2}(\lambda_{i}+1)^{2}-4\beta (\lambda_{i}+1)}=k+m\bm{i},
	\end{equation}
	where $\bm{i}=\sqrt{-1}$, $k$ and $m$ are real.
	
	It follows from \eqref{e25} that $\mathcal{R}(s_{ij})<0$ if and only if $-\alpha(\mathcal{R}(\lambda_i)+1)<k<\alpha(\mathcal{R}(\lambda_i)+1)$, which implies that $\mathcal{R}(\lambda_i) \geq 0$ and $k^2<\alpha^2(\mathcal{R}(\lambda_i)+1)^2$.
	By squaring both sides of \eqref{e26}, it can be obtained that
	\begin{equation}
		\begin{aligned}
			&k^{2}{-}m^{2}{=}\alpha^{2}(\mathcal{R}^{2}(\lambda_{i}){-}\mathcal{I}^{2}(\lambda_{i})){+}(2\alpha^2{-}4\beta)\mathcal{R}(\lambda_{i}){+}\alpha^2{-}4\beta,\\
		    &km=\alpha^{2}\mathcal{R}(\lambda_{i})\mathcal{I}(\lambda_{i})+(\alpha^2-2\beta)\mathcal{I}(\lambda_{i}).
		\end{aligned}
	\end{equation}
	By simple calculation, one obtains that 
\begin{equation}
	\begin{aligned}
		&k^4 - \bigl[ \alpha^{2}(\mathcal{R}^{2}(\lambda_{i}) - \mathcal{I}^{2}(\lambda_{i})) + (2\alpha^2 - 4\beta)\mathcal{R}(\lambda_{i}) + \alpha^2 - 4\beta \bigr] k^2 - \bigl[ \alpha^{2}\mathcal{R}(\lambda_{i})\mathcal{I}(\lambda_{i}) + (\alpha^2 - 2\beta)\mathcal{I}(\lambda_{i}) \bigr]^2 = 0,
	\end{aligned}
\end{equation}
where
\begin{equation*}
	\begin{aligned}
		k_1^2 &= \frac{1}{2} \Bigl\{ \alpha^{2}(\mathcal{R}^{2}(\lambda_{i}) - \mathcal{I}^{2}(\lambda_{i})) +(2\alpha^2 - 4\beta)\mathcal{R}(\lambda_{i}) + \alpha^2 - 4\beta 
		+ \Bigl[ \bigl( \alpha^{2}(\mathcal{R}^{2}(\lambda_{i}) - \mathcal{I}^{2}(\lambda_{i})) +\\& (2\alpha^2 - 4\beta)\mathcal{R}(\lambda_{i}) + \alpha^2 - 4\beta \bigr)^2 
		\quad + 4\bigl[ \alpha^{2}\mathcal{R}(\lambda_{i})\mathcal{I}(\lambda_{i}) + (\alpha^2 - 2\beta)\mathcal{I}(\lambda_{i}) \bigr]^2 \Bigr]^{\frac{1}{2}} \Bigr\}, \\[6pt]
		k_2^2 &= \frac{1}{2} \Bigl\{ \alpha^{2}(\mathcal{R}^{2}(\lambda_{i}) - \mathcal{I}^{2}(\lambda_{i})) + (2\alpha^2 - 4\beta)\mathcal{R}(\lambda_{i}) + \\&\alpha^2 - 4\beta 
		- \Bigl[ \bigl( \alpha^{2}(\mathcal{R}^{2}(\lambda_{i}) - \mathcal{I}^{2}(\lambda_{i})) + (2\alpha^2 - 4\beta)\mathcal{R}(\lambda_{i}) + \alpha^2 - 4\beta \bigr)^2 \\
		&\quad + 4\bigl[ \alpha^{2}\mathcal{R}(\lambda_{i})\mathcal{I}(\lambda_{i}) + (\alpha^2 - 2\beta)\mathcal{I}(\lambda_{i}) \bigr]^2 \Bigr]^{\frac{1}{2}} \Bigr\}.
	\end{aligned}
\end{equation*}

	It is obvious that $k_2^2<k_1^2$, thus $k^2<\alpha^2(\mathcal{R}(\lambda_i)+1)^2$ is valid if $k_1^2<\alpha^2(\mathcal{R}(\lambda_i)+1)^2$ holds. Thus, it can be calculated that the control parameters $\alpha,\beta$ must satisfy the following condition:
	\begin{equation}\label{e77}
		\frac{\alpha^2}{\beta}>\max_{1\leq i\leq n+1}\frac{\mathcal{I}^{2}(\lambda_{i})}{(\mathcal{R}(\lambda_{i})+1)\mathcal{I}^{2}(\lambda_{i})+(\mathcal{R}(\lambda_{i})+1)^3}.
	\end{equation}
	
	Following the above discussions, the conclusion is reached: $\lim\limits_{t \rightarrow \infty}e=\bm{0}$ if and only if $\bar{\mathcal{G}}$ has a directed spanning tree and \eqref{e77} holds. According to Lemma \ref{lemma1}, it obtains that $N(\mathcal{\bar{L}})=span\{\boldsymbol{1}_{n+1}\}$. Thus, $N(\mathcal{\bar{L}} \otimes I_3)=\{\boldsymbol{1}_{3(n+1)} \}$, which implies that $q_i-h_i=q_j-h_j,i,j=1,2,\cdots,n$. Therefore, the time-varying formation tracking control is achieved.
\end{proof}

\begin{remark}
	Compared with \cite{fang2021distributed,dong2016time,brinon2014cooperative}, where the unicycle models and integrator models are considered, the dynamic model of the wheeled robots considered here takes the underactuated characteristic into consideration, thus the designed controller is more practical. In \cite{sun2021time}, the time-varying formation control of uncertain Euler-Lagrange system is studied, where undirected and connected topology is considered. However, the undirected topology requires reciprocal information exchange, which imposes a symmetry condition that is often inconsistent with practical communication settings. In contrast, the weighted directed communication topology considered here provides a more general framework that better reflects realistic communication constraints, which is more practical.
\end{remark}

\subsection{Region-Constrained Controller Design}

In this subsection, the inequality constraint force $F^{c,i}$ is designed as follows.

For simplicity of discussion, it is assumed that all robots are restricted to move within a rectangular region. The region constraint is represented as 
\begin{equation}\label{e76}
	x_o<x_i<x_f,y_o<y_i<y_f, i=0,1,2,\cdots,n
\end{equation}
where $x_o,x_f$ are the $x-$direction region boundaries, and $y_o,y_f$ are the $y-$direction region boundaries. 

To solve the above inequality-constrained problem, an appropriate diffeomorphism is introduced. Let the diffeomorphism be $\xi=\phi(q)=[\xi_0,\xi_1,\cdots,\xi_n]^T$, where $\xi_i=[\xi_{ix},\xi_{iy}]^T$, with
\begin{equation}\label{e73}
	\begin{aligned}
		& \xi_{ix}=tan\left(\frac{\pi}{x_f-x_o}\left(x_i-\frac{x_o+x_f}{2}\right)\right),\\
		&\xi_{iy}=tan\left(\frac{\pi}{y_f-y_o}\left(y_i-\frac{y_o+y_f}{2}\right)\right).
	\end{aligned}
\end{equation}
It is obvious that $\xi_{ix} \rightarrow +\infty$ as $x_i \rightarrow x_f$, $\xi_{ix} \rightarrow -\infty$ as $x_i \rightarrow x_o$, $\xi_{iy} \rightarrow +\infty$ as $y_i \rightarrow y_f$, and $\xi_{iy} \rightarrow -\infty$ as $y_i \rightarrow y_o$. Thus, $\xi_{ix},\xi_{iy}$ stay bounded inside the rectangle if and only if $x_o<x_i<x_f,y_o<y_i<y_f$.

By simple calculation, the Jacobi matrix $\frac{\partial \phi}{\partial q}$ is represented as $\frac{\partial \phi}{\partial q}=diag\left\{\frac{\partial \xi_0}{\partial q_0}, \frac{\partial \xi_1}{\partial q_1},\cdots,\frac{\partial \xi_n}{\partial q_n}\right\}$, where
\begin{equation}
	\frac{\partial\xi_i}{\partial q_i}{=}
	\begin{bmatrix}
		\frac{\partial \xi_{ix}}{\partial x_i} & \frac{\partial \xi_{ix}}{\partial y_i} & \frac{\partial\xi_{ix}}{\partial \theta_i}\\
		\frac{\partial \xi_{iy}}{\partial x_i} & \frac{\partial \xi_{iy}}{\partial y_i} & \frac{\partial \xi_{iy}}{\partial \theta_i}
	\end{bmatrix}.
\end{equation}

Note that collision avoidance occurs only when the distance between the robot and the region boundary is within a specified distance. For this purpose, let the inner boundary parameters defined as $x_b,x_c,y_b,y_c$, where $x_o<x_b<x_c<x_f,y_o<y_b<y_c<y_f$.

In what follows, a theorem is established to make the robots satisfy the region constraint \eqref{e76}.

\begin{theorem}\label{th2}
	\vspace{2pt} 
	Consider the wheeled mobile robots modeled as \eqref{e35}. With the control protocol $F^{c,i}=M(Q_{n+1}\otimes I_3)r^{*}$, where
	\vspace{5pt} 
	\[
	r^{*} = \begin{cases}
		\bm{0}, & (x_i, y_i) \in [x_b, x_c] \times [y_b, y_c], \ \forall i \in \bar{\Omega}, \\
		-{p_2}^{+} \bigl( p_{1} + \gamma_{1} \dot{\xi} + \gamma_{2} \xi \bigr), & \text{otherwise}
	\end{cases}
	\]
	\vspace{5pt} 
	$Q_{n+1} = \dfrac{1}{n+1} \bm{1}_{n+1} \bm{1}_{n+1}^T$, and $\gamma_1 > 0, \gamma_2 > 0$, the region-constrained control of wheeled mobile robots \eqref{e76} can be achieved.
\end{theorem}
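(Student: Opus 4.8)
The objective is to show that, under the proposed switched controller, the transformed state $\xi=\phi(q)$ remains bounded for all $t\ge0$; by the property recorded right after \eqref{e73} (each $\xi_{ix},\xi_{iy}$ is finite if and only if $x_o<x_i<x_f$ and $y_o<y_i<y_f$), boundedness of $\xi$ is precisely the region constraint \eqref{e76}. I would start from the closed-loop $\xi$-dynamics. Since the inequality constraint force is taken as $F^{c,i}=M(Q_{n+1}\otimes I_3)r^{*}=M(I-A^+A)r^{*}$ with $A=\bar{\mathcal L}\otimes I_3$ (recall from \eqref{e40} that $I-A^+A=Q_{n+1}\otimes I_3$), the associated additional acceleration is $a^{c,i}=(I-A^+A)r^{*}$, so \eqref{e81} applies verbatim, namely $\ddot\xi=p_1(q,\dot q,t)+p_2(q,\dot q,t)\,r^{*}$ with $p_2=(\partial\phi/\partial q)(I-A^+A)$. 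As $a^{c,i}\in N(A)$, the equality constraint $A\ddot q=b$ is left intact, so the formation-tracking error still obeys \eqref{e23} and Theorem~\ref{th1} remains valid irrespective of $r^{*}$; I would invoke this only to keep $q$, $\dot q$ (hence $p_1$, $p_2$, $\dot\xi$) finite along the trajectory between switches.

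I would then treat the ``active'' phase, in which at least one robot lies outside its inner box $[x_b,x_c]\times[y_b,y_c]$ and $r^{*}=-p_2^{+}(p_1+\gamma_1\dot\xi+\gamma_2\xi)$. Substituting into the $\xi$-dynamics yields $\ddot\xi=p_1-p_2p_2^{+}(p_1+\gamma_1\dot\xi+\gamma_2\xi)$, which I would reduce to the linear homogeneous system
\[
\ddot\xi+\gamma_1\dot\xi+\gamma_2\xi=\bm{0}.
\]
Since $\gamma_1,\gamma_2>0$, this is a Hurwitz second-order system, so the Lyapunov function $V=\tfrac12\dot\xi^{T}\dot\xi+\tfrac{\gamma_2}{2}\xi^{T}\xi$ satisfies $\dot V=-\gamma_1\|\dot\xi\|^2\le0$, giving $\|\xi(t)\|^2\le\tfrac{2}{\gamma_2}V(t_\star)$ for every $t$ in an active phase entered at time $t_\star$; in fact $\xi\to\bm{0}$, so the robots are steered back toward the rectangle centre and eventually re-enter the inner boxes. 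Because a switch into the active phase happens while the robots sit on the boundary of the inner boxes --- strictly interior to the rectangle \eqref{e76} --- both $\xi(t_\star)$ and $\dot\xi(t_\star)=(\partial\phi/\partial q)\dot q$ are finite there (the Jacobian $\partial\phi/\partial q$ is bounded at such interior configurations and $\dot q$ is finite), and $\|\xi(t_\star)\|$ is in fact dominated by a constant depending only on $x_b,x_c,y_b,y_c$. Hence $\xi$ stays uniformly bounded throughout every active phase.

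In the complementary ``idle'' phase ($r^{*}=\bm{0}$, all robots in $[x_b,x_c]\times[y_b,y_c]$) there is nothing to prove, since $x_i\in[x_b,x_c]\subset(x_o,x_f)$ and $y_i\in[y_b,y_c]\subset(y_o,y_f)$ for every $i$. Patching the two phases together along the switching surface and using continuity of $q$ --- so that the trajectory cannot pass from the inner boxes to the rectangle boundary without first entering, and being re-steered inside, the active phase --- I would conclude that \eqref{e76} holds for all $t\ge0$ whenever the initial state satisfies it.

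The step I expect to be the main obstacle is the reduction of $\ddot\xi=p_1-p_2p_2^{+}(p_1+\gamma_1\dot\xi+\gamma_2\xi)$ to the clean form $\ddot\xi+\gamma_1\dot\xi+\gamma_2\xi=\bm{0}$: since $p_2p_2^{+}$ is the orthogonal projector onto $\mathrm{range}(p_2)$, this requires checking that $p_1+\gamma_1\dot\xi+\gamma_2\xi$ lies in $\mathrm{range}(p_2)$ --- equivalently, that the $\xi$-correction attainable through the common-mode actuation $Q_{n+1}\otimes I_3$ is exactly the one needed --- so that $p_2p_2^{+}$ acts as the identity on it; if that fails, one has to instead treat the leftover term $(I-p_2p_2^{+})(p_1+\gamma_1\dot\xi+\gamma_2\xi)$ as a disturbance and argue input-to-state boundedness of the above second-order system. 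A secondary difficulty is the hybrid bookkeeping across switches --- producing uniform bounds on $\xi$, $\dot\xi$ at the switching instants and ruling out finite-time escape during idle phases --- which the Lyapunov/continuity argument above is meant to handle.
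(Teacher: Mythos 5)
Your proposal follows essentially the same route as the paper: transform the region constraint via the diffeomorphism \eqref{e73}, note that $I-A^{+}A=Q_{n+1}\otimes I_3$ by \eqref{e40}, substitute $r^{*}$ into the $\xi$-dynamics \eqref{e81} to obtain $\ddot{\xi}+\gamma_1\dot{\xi}+\gamma_2\xi=\bm{0}$, and conclude boundedness of $\xi$ (the paper asserts convergence of \eqref{e71} directly where you add a Lyapunov function and the switching bookkeeping). The one substantive difference is that the ``main obstacle'' you flag --- that the reduction to $\ddot{\xi}+\gamma_1\dot{\xi}+\gamma_2\xi=\bm{0}$ requires $p_1+\gamma_1\dot{\xi}+\gamma_2\xi\in\mathrm{range}(p_2)$ so that $p_2p_2^{+}$ acts as the identity on it --- is not verified in the paper either: the paper simply writes down \eqref{e71}, acknowledges only in Remark~2 (and in the simulation discussion of Fig.~\ref{fig_6}) that the inequality correction confined to $N(A)=\mathrm{span}\{Q_{n+1}\otimes I_3\}$ may be insufficient, so your more cautious treatment (ISS with the residual $(I-p_2p_2^{+})(p_1+\gamma_1\dot{\xi}+\gamma_2\xi)$ as a disturbance) is, if anything, an improvement on the published argument rather than a deviation from it.
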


\begin{proof}

	Let the desired value $r^{*}$ be
	\begin{equation}\label{e69}
		r^{*}=\left\{\begin{array}{ll}
			\bm{0}, (x_i,y_i) \in [x_b,x_c]\times[y_b,y_c], \forall i \in \bar{\Omega}, \\
			-{p_2}^+\left(p_{1}+\gamma_{1} \dot{\xi}+\gamma_{2} \xi\right) , otherwise.
		\end{array}\right.
	\end{equation}
	where $\gamma_1>0$ and $\gamma_2>0$.
	
	If all the robots are located within $[x_b,x_c]\times[y_b,y_c]$, then $r{=}r^{*}{=}\bm{0}$. This implies that the inequality-constrained force $F^{c,i}{=}M[(I_{n+1}\otimes I_3){-}A^{+}A]r^{*}{=}\bm{0}$. Thus, the region constraint is no need to consider.
	
	If not all the robots are located within $[x_b,x_c]\times[y_b,y_c]$ with $r=r^{*}=-{p_2}^+\left(p_{1}+\gamma_{1} \dot{\xi}+\gamma_{2} \xi\right)$, then
	\begin{equation}\label{e71}
		\ddot{\xi}=-\gamma_1 \dot{\xi}-\gamma_2 \xi.
	\end{equation}\\
	It follows from \eqref{e71} that $\xi$ will converge to $\bm{0}$, which implies that the positions of all the robots will converge to the center of the constrained region.
	
	Based on the above discussions, it follows from \eqref{e69} that $\xi$ will be bounded. Thus, the region constraint \eqref{e76} can be satisfied all the time.
	
	The inequality-constrained force $F^{c,i}$ is 
	\begin{equation}\label{e74}
		\begin{aligned}
			F^{c,i}&{=}M[(I_{n+1}\otimes I_3){-}A^{+}A]r^{*}
			{=}M[(I_{n+1}\otimes I_3){-}(\bar{\mathcal{L}}\otimes I_3)^{+}(\bar{\mathcal{L}}\otimes I_3)]r^{*}.
		\end{aligned}
	\end{equation}
	Combining \eqref{e40} and \eqref{e74}, it can be derived that
	\begin{equation}\label{e105}
		\begin{aligned}
			F^{c,i}&=M[(I_{n+1}\otimes I_3)-(I_{n+1}-Q_{n+1})\otimes I_3]r^{*}
			=M(Q_{n+1}\otimes I_3)r^{*},
		\end{aligned}
	\end{equation}
	where $Q_{n+1}=\frac{1}{n+1}\bm{1}_{n+1}\bm{1}_{n+1}^T$.
	
	Following from \eqref{e7}, the input driving torque $U^{c,i}$ is calculated as 
	\begin{equation}
		U^{c,i}=(P\otimes I_{n+1})^{+}F^{c,i}.
	\end{equation}
	
\end{proof}

In the above subsections, the time-varying formation tracking controller \eqref{e101} and the region-constrained controller \eqref{e105} are designed respectively. Now a time-varying formation tracking controller with region constraint is proposed as follows.

\begin{theorem}\label{th3}
	Consider the wheeled mobile robots modeled as \eqref{e35}. With the controller designed as $F^{c}=F^{c,e}+F^{c,i}$, where
	\begin{align}\label{e102}
		\begin{cases}
			F^{c,e}{=}M[(I_{n+1}{-}Q_{n+1}){\otimes}I_3][\ddot{h}{-}\alpha(\dot{q}{-}\dot{h}){-}\beta(q{-}h)
			{-}M^{-1}F]{+}M(\bar{L}{\otimes}I_3)[{-}\alpha(\dot{q}{-}\dot{h}){-}\beta(q{-}h)]{-}\\
			\frac{M}{n+1}[(\bm{1}_{n+1}\otimes \eta^T)\otimes I_3][{-}\alpha(\dot{q}{-}\dot{h}){-}\beta({q{-}h})],\\
			F^{c,i}=M(Q_{n+1}\otimes I_3)r^{*},
		\end{cases}
	\end{align}
	the time-varying formation tracking control \eqref{e37} and the region-constrained control \eqref{e76} of the wheeled mobile robots can be achieved.
\end{theorem}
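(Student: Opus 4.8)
The plan is to exploit the orthogonal splitting built into the GUK construction: the equality force $F^{c,e}$ lies in $\mathrm{range}\big((\bar{\mathcal{L}}\otimes I_3)^{+}(\bar{\mathcal{L}}\otimes I_3)\big)$, while the inequality force $F^{c,i}=M(Q_{n+1}\otimes I_3)r^{*}$ lies in the complementary subspace $N(\bar{\mathcal{L}}\otimes I_3)$, so the two feedback channels are decoupled and Theorems~\ref{th1} and~\ref{th2} can be reused with essentially no change. First I would substitute $F^{c}=F^{c,e}+F^{c,i}$, with the two expressions of \eqref{e102}, into the closed loop \eqref{e35}.

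For the formation-tracking part, I would pre-multiply the closed loop by $(\bar{\mathcal{L}}\otimes I_3)M^{-1}$ exactly as in the proof of Theorem~\ref{th1}. The only new term is $(\bar{\mathcal{L}}\otimes I_3)M^{-1}F^{c,i}=(\bar{\mathcal{L}}\otimes I_3)(Q_{n+1}\otimes I_3)r^{*}=\big(\bar{\mathcal{L}}Q_{n+1}\big)\otimes I_3\,r^{*}$, which vanishes because $Q_{n+1}=\frac{1}{n+1}\bm{1}_{n+1}\bm{1}_{n+1}^{T}$ and $\bar{\mathcal{L}}\bm{1}_{n+1}=\bm{0}_{n+1}$. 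Hence the tracking error $e=(\bar{\mathcal{L}}\otimes I_3)(q-h)$ still obeys the linear dynamics \eqref{e23}, and under the parameter condition \eqref{e77} of Theorem~\ref{th1} one gets $\lim_{t\to\infty}e=\bm{0}$; since $N(\bar{\mathcal{L}}\otimes I_3)=span\{\bm{1}_{3(n+1)}\}$ by Lemma~\ref{lemma1}, this yields $q_i-h_i=q_j-h_j$, so \eqref{e37} is achieved.

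For the region-constraint part, I would differentiate $\xi=\phi(q)$ twice and insert $\ddot{q}=M^{-1}F+M^{-1}F^{c,e}+M^{-1}F^{c,i}$. By \eqref{e40}, $M^{-1}F^{c,i}=(Q_{n+1}\otimes I_3)r^{*}=\big[(I_{n+1}\otimes I_3)-(\bar{\mathcal{L}}\otimes I_3)^{+}(\bar{\mathcal{L}}\otimes I_3)\big]r^{*}$, which is precisely the term $(I-A^{+}A)r^{*}$ of \eqref{e85}, while $M^{-1}F^{c,e}$ coincides with the additional acceleration $a^{c,e}$ already folded into $p_1$ in \eqref{e81}. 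Therefore $\ddot{\xi}=p_1+p_2r^{*}$, exactly the identity used in the proof of Theorem~\ref{th2}: with the switching rule \eqref{e69}, either all robots lie in the inner region $[x_b,x_c]\times[y_b,y_c]\subset(x_o,x_f)\times(y_o,y_f)$, where $F^{c,i}=\bm{0}$ and \eqref{e76} holds trivially, or $\ddot{\xi}=-\gamma_1\dot{\xi}-\gamma_2\xi$ as in \eqref{e71}, which keeps $\xi$ bounded; by the choice of diffeomorphism \eqref{e73}, boundedness of $\xi$ is equivalent to \eqref{e76}, so the region constraint is never violated.

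The step I expect to be the main obstacle is the decoupling itself: one must verify carefully that the Baumgarte-modified equality force \eqref{e101} still factors through $(\bar{\mathcal{L}}\otimes I_3)^{+}(\bar{\mathcal{L}}\otimes I_3)$---so that pre-multiplication by $(\bar{\mathcal{L}}\otimes I_3)M^{-1}$ reproduces \eqref{e20} unchanged---and, dually, that $F^{c,i}$ contributes nothing to the formation-error dynamics while $F^{c,e}$ contributes nothing to $\ddot{\xi}$ beyond $p_1$; a secondary subtlety is the piecewise definition of $r^{*}$, where one argues on each interval between switching instants that the stable linear $\xi$-dynamics keep $\xi$ inside the admissible region.
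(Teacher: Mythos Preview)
Your proposal is correct and follows exactly the route the paper intends: the paper's own proof of Theorem~\ref{th3} is a one-line appeal to Theorems~\ref{th1} and~\ref{th2}, and your argument simply makes explicit the decoupling via $\bar{\mathcal{L}}Q_{n+1}=0$ and $(Q_{n+1}\otimes I_3)=I-(\bar{\mathcal{L}}\otimes I_3)^{+}(\bar{\mathcal{L}}\otimes I_3)$ that lets those two theorems be invoked independently. There is nothing to add.
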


\begin{proof}
	Following the proofs of Theorem \ref{th1} and Theorem \ref{th2}, this theorem can be easily proved. Thus, the proof is omitted.
\end{proof}

\begin{remark}
	In \cite{zhang2023novel}, the generalized Udwadia-Kalaba formulation is first proposed to model and control for single pan/tilt device. The equality constraint imposed on pan joint angle $\theta_1$ is modeled as $\theta_1=\frac{\pi}{12}\sin(\frac{\pi}{12})$ while the inequality constraint imposed on tilt joint angle $\theta_2$ is modeled as $-\frac{\pi}{12}<\theta_2(t)<\frac{\pi}{12}$. Compared with \cite{zhang2023novel}, the present paper advances the GUK framework in several significant respects. In terms of equality constraints, the formulation is extended from trajectory tracking of single pan/tilt device to time-varying formation tracking control of mobile robots with a weighted directed communication topology, enabling cooperative behaviors among multiple wheeled mobile robots. In terms of inequality constraints, rather than considering simple motion-boundary constraints, this work integrates practical region constraints to ensure group-level safety, embedding them into the GUK framework via diffeomorphisms which are different and constructed specifically.
\end{remark}

\section{Numerical simulations}\label{sec:5}

To illustrate the effectiveness of the proposed controllers, some numerical simulations are conducted. The parameter values of the four-wheeled robots are listed in Table \ref{tab:table1}. The communication topology and the initial conditions are shown in Figure \ref{fig_2} and Table \ref{tab:table2}, respectively.

The Laplacian matrix of the network is 
\begin{equation}
	\begin{array}{l}
		\bar{\mathcal{L}}=\left[ \begin{matrix}
			0&		0&		0&		0&		0&	\\
			-0.8&		1.3&		0&		0&		-0.5&	\\
			0&		-0.6&		0.9&		-0.3&		0&	\\
			0&		0&		-0.3&		0.3&		0&	\\
			0&		-0.5&		0&		-0.3&		0.8&	\\
		\end{matrix} \right]\\
	\end{array}
\end{equation}

\begin{table*}[!h]%
	\caption{Parameter values of the four-wheeled robot.\label{tab:table1}}
	\begin{tabular*}{\textwidth}{@{\extracolsep\fill} l c @{}}
		\toprule
		\textbf{Parameter} & \textbf{Value} \\
		\midrule
		Mass of the mobile robot & $m = 1\,\mathrm{kg}$ \\
		Radius of the wheels & $d = 0.05\,\mathrm{m}$ \\
		Distance from the wheel axis to the robot’s center & $l = 0.1\,\mathrm{m}$ \\
		Robot’s central moment of inertia & $I = 1\,\mathrm{kg\cdot m^2}$ \\
		\bottomrule
	\end{tabular*}
\end{table*}

\begin{table*}[!h]%
	\caption{Initial states of the wheeled mobile robots.\label{tab:table2}}
	\begin{tabular*}{\textwidth}{@{\extracolsep\fill} c c c c c c c @{}}
		\toprule
		$i$ &
		\makecell{\textbf{$x_i(0)$}\\\textbf{(m)}} &
		\makecell{\textbf{$\dot{x}_i(0)$}\\\textbf{(m/s)}} &
		\makecell{\textbf{$y_i(0)$}\\\textbf{(m)}} &
		\makecell{\textbf{$\dot{y}_i(0)$}\\\textbf{(m/s)}} &
		\makecell{\textbf{$\theta_i(0)$}\\\textbf{(rad)}} &
		\makecell{\textbf{$\dot{\theta}_i(0)$}\\\textbf{(rad/s)}} \\
		\midrule
		0 & 0 & 0 & 0 & 0 & 0 & 0 \\
		1 & -4 & 2 & 4 & 0 & 0 & 0.2 \\
		2 & -4 & 0 & 2 & 2 & $\pi/2$ & 0.8 \\
		3 & -4 & -3 & -2 & 0 & $\pi$ & 1.8 \\
		4 & -4 & 0 & -4 & -4 & $3\pi/2$ & 2 \\
		\bottomrule
	\end{tabular*}
\end{table*}

\begin{figure}[!t]
	\centering
	\includegraphics[width=0.5\columnwidth]{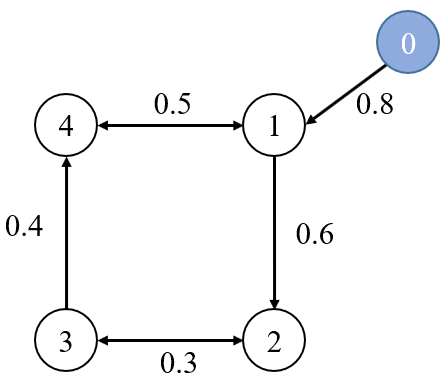}
	\caption{Network topology $\bar{\mathcal{G}}$.}
	\label{fig_2}
\end{figure}

\subsection{Time-Varying Formation Tracking Control Without Region Constraint}
In this subsection, the effectiveness of the proposed time-varying formation tracking controller \eqref{e101} is illustrated.

It is assumed that the trajectory of the leader is
\begin{equation}\label{e115}
	{q }_{0}=\left[ 0.1t,3\sin \frac{2\pi t}{300} ,0 \right] ^T.
\end{equation}

The desired time-varying formation is a circular motion with variable radius, which is described by
\begin{equation}\label{e72}
	\begin{array}{l}
		{h}_0(t)=[0,0,0]^T,\\
		{h}_1(t)=\left[R(t)sin(wt),R(t)cos(wt),wt \right]^T,\\
		{h}_2(t)=\left[R(t)sin(wt+\frac{\pi}{2}) ,R(t)cos(wt+\frac{\pi}{2}),wt+\frac{\pi}{2} \right] ^T,\\
		{h}_3(t)=\left[R(t)sin(wt+\pi),R(t)cos(wt+\pi),wt+\pi \right] ^T,\\
		{h}_4(t)=\left[R(t)sin(wt+\frac{3\pi}{2}),R(t)cos(wt+\frac{3\pi}{2}),wt+\frac{3\pi}{2} \right] ^T,\\
	\end{array}
\end{equation}
where $w=0.6$ is the desired angular velocity.

The simulation time is 470$s$ and the radii of the four robots are given by
\begin{equation}
	R(t)=\left\{\begin{array}{ll}
		4+2\cos(\frac{2\pi t}{500}), t \leq 300, \\
		4+2\cos(\frac{6\pi}{5})+2\sin(\frac{\pi(t-300)}{300}) , 300<t \leq 470.
	\end{array}\right.	
\end{equation}

Let $\alpha=4$, $\beta=0.5$. With the proposed time-varying formation tracking controller \eqref{e101}, The robots’ trajectories are illustrated in Figure \ref{fig_3}. The void dots represent the starting positions of the robots while the trajectories of the robots are represented by black dotted lines. The leader moves along the desired trajectory defined by \eqref{e115}. It is obvious that the robots move along with the leader after the desired time-varying formation is achieved. However, it is noted that some robots exceed the constrained region at the beginning and the end of the motion, which implies that the robots collide with the region boundaries. This, the safety of the robots cannot be guaranteed only with the controller \eqref{e101}. Moreover, the formation tracking errors $||e_i(t)||$ are illustrated in Figure \ref{fig_4}. Clearly, the formation tracking errors converge to 0 quickly. The driving torques $U_{i}^{c,e}$ of the left and right wheels of each robot are shown in Figure \ref{fig_5}. It is obvious that the input torques can converge to 0 fastly when the desired time-varying formation is achieved.

Therefore, with the proposed time-varying formation tracking controller \eqref{e101}, the wheeled mobile robots can achieve the time-varying formation tracking control. However, it is noted that the robots might collide with the region boundaries at the beginning and the end of the motion, therefore, the safety of the robots cannot be guaranteed.

\begin{figure}[!htbp]
	\centering
	\includegraphics[width=0.8\columnwidth]{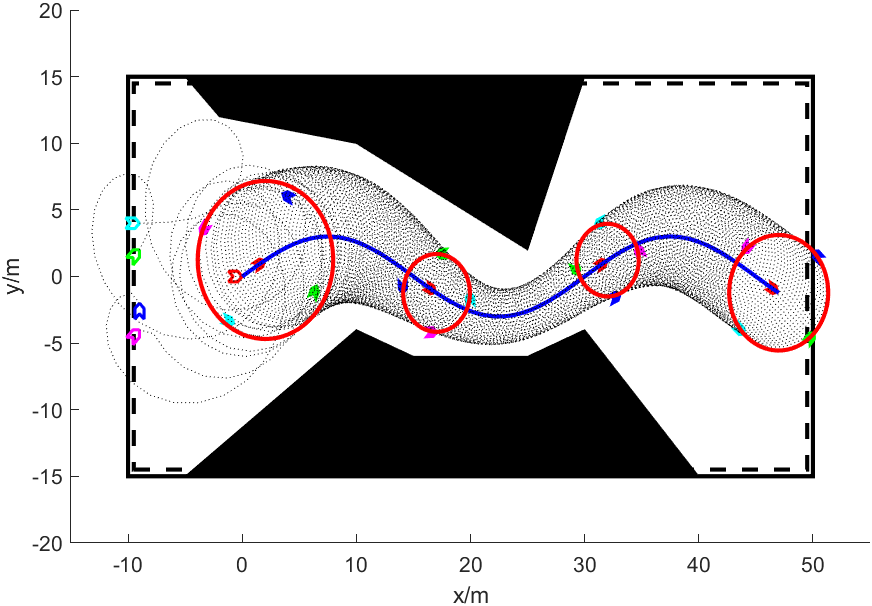}
	\caption{Trajectories of the wheeled mobile robots.}
	\label{fig_3}
\end{figure}

\begin{figure}[!htbp]
	\centering
	\includegraphics[width=0.8\columnwidth]{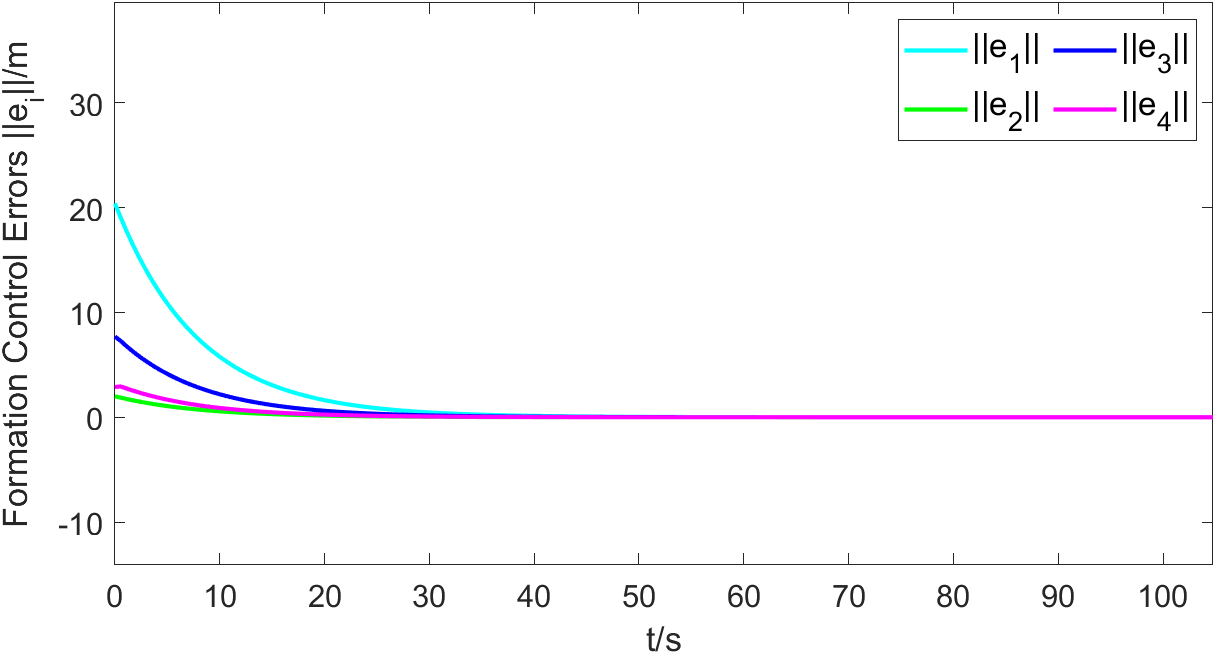}
	\caption{Formation tracking errors of the wheeled robots.}
	\label{fig_4}
\end{figure}

\begin{figure}[!htbp]
	\centering
	\includegraphics[width=0.8\columnwidth]{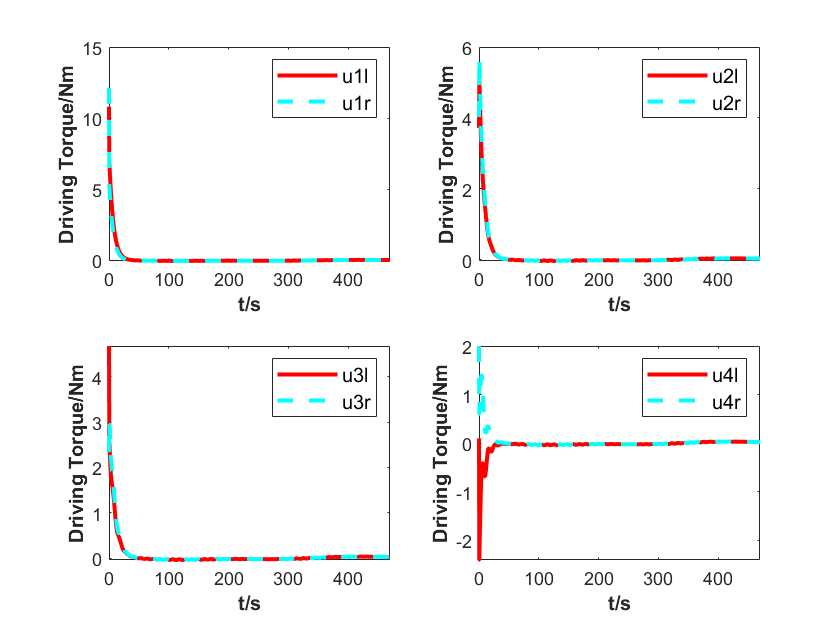}
	\caption{Input driving torques $U_i^{c,e}$.}
	\label{fig_5}
\end{figure}

\subsection{Time-Varying Formation Tracking Control With Region Constraint}
In this subsection, the effectiveness of the region-constrained controller \eqref{e74} is demonstrated.

Considering the size of the robots, the inner boundaries are set $0.5m$ away from the outer boundaries. The parameters in \eqref{e76} are set as $x_o=-10m,x_b=-9.5m,x_c=49.5m,x_f=50m;y_o{=}-15m,y_b{=}-14.5m,y_c{=}14.5m,y_f{=}15m$. In simulation figures, the inner and outer boundaries are represented by the solid line curves and the dashed line curves, respectively.

Let $\alpha=4$, $\beta=0.5$. With the region-constrained formation controller \eqref{e102}, the trajectories of the robots are shown in Figure \ref{fig_6}. Compared with Figure \ref{fig_3}, it is obvious that the trajectories of the robots are restricted within the constrained region, which ensures the safety of the robots. It is noted that the robots are not in the corresponding circular orbits at the end of the motion. If the robots still move in the desired circular orbits, they will collide with the region boundaries. At this time, the equality constraints and the region constraint are in conflict, which cannot be satisfied simultaneously. Thus, the robots violate their desired circular orbits owing to the existence of $F^{c,i}$. 

Figure \ref{fig_7} shows the tracking errors $||e_i(t)||$ of robots. Clearly, the tracking errors converge to 0 rapidly. Moreover, the driving torques $U_{i}^{c,e}$ and $U_{i}^{c,i}$ of the left and right wheels of each robot are shown in Figure \ref{fig_8}. It is obvious that the inequality-constrained torques $U_{i}^{c,i}$ indicated by blue lines, are equal to zero except for a small time interval where the region constraint is not satisfied. 

Therefore, with the region constrained formation controller \eqref{e102}, the time-varying formation tracking control of the wheeled mobile robots with region constraint are achieved.

\begin{figure}[!htbp]
	\centering
	\includegraphics[width=0.8\columnwidth]{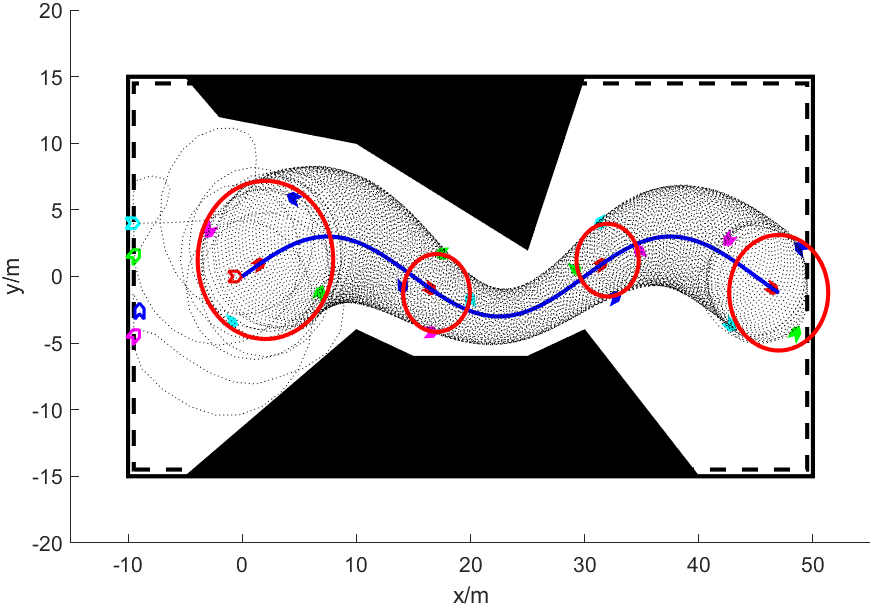}
	\caption{Trajectories of the wheeled mobile robots.}
	\label{fig_6}
\end{figure}

\begin{figure}[!htbp]
	\centering
	\includegraphics[width=0.8\columnwidth]{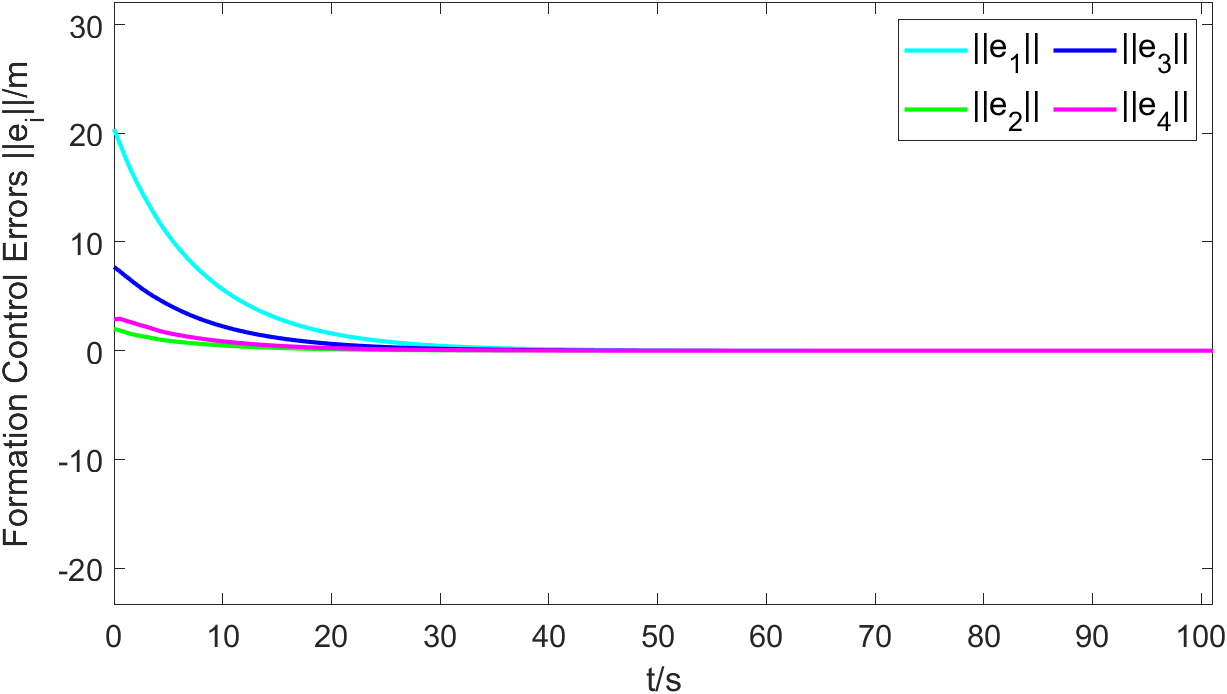}
	\caption{Formation tracking errors of the wheeled robots.}
	\label{fig_7}
\end{figure}

\begin{figure}[!htbp]
	\centering
	\includegraphics[width=0.8\columnwidth]{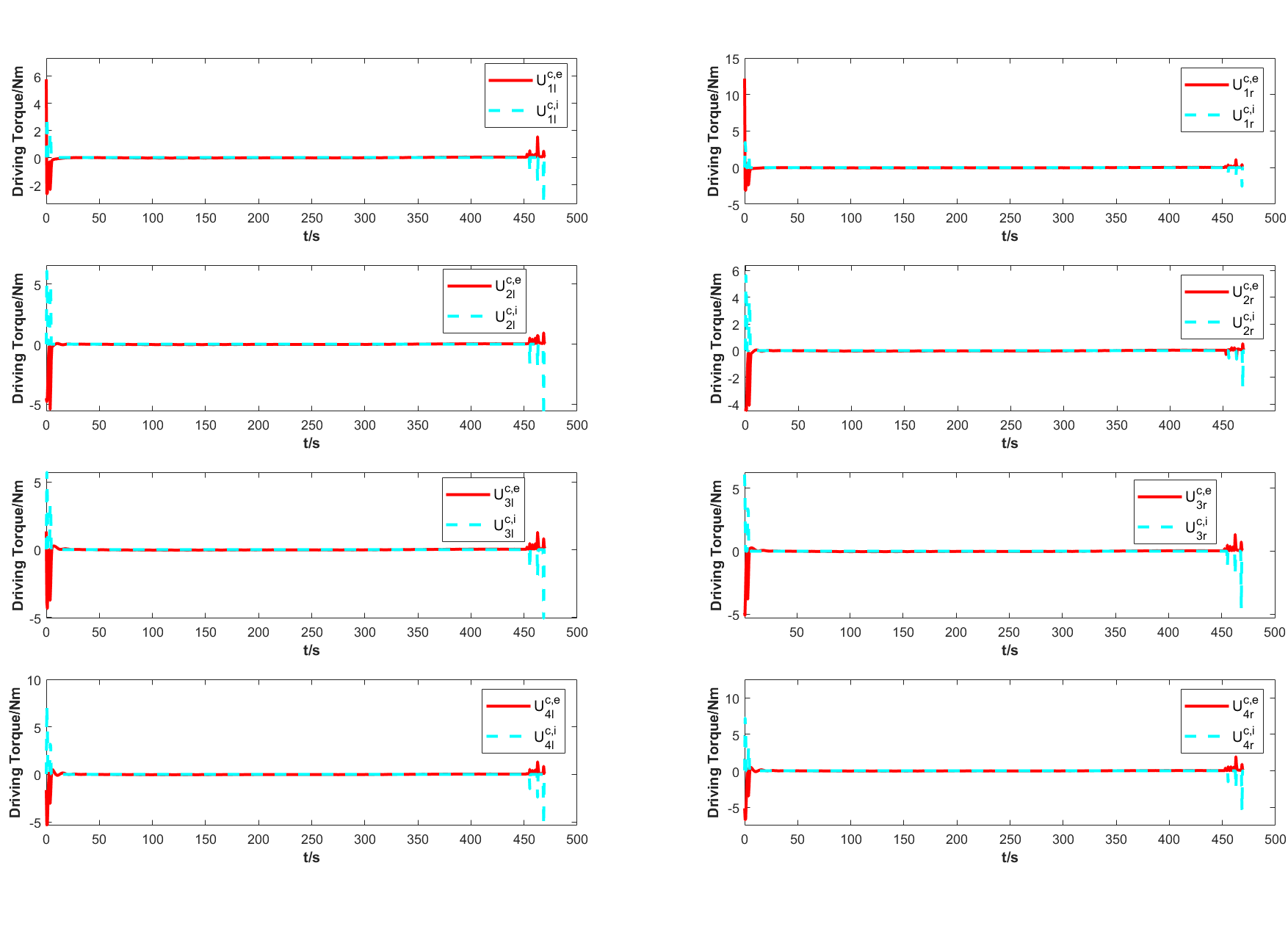}
	\caption{Input driving torques $U_{i}^{c,e}$ and $U_i^{c,i}$.}
	\label{fig_8}
\end{figure}

\begin{figure}[!htbp]
	\centering
	\includegraphics[width=0.8\columnwidth]{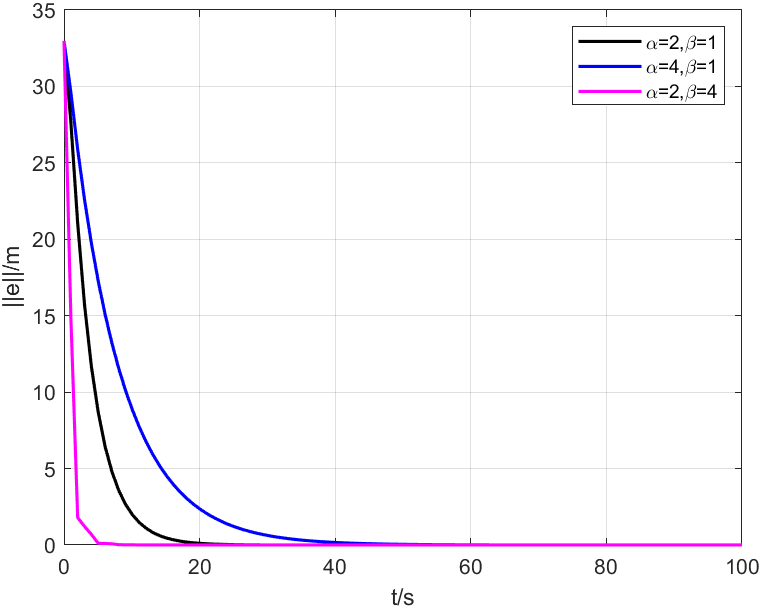}
	\caption{Formation control errors of the robots in various settings with different control parameters.}
	\label{fig_9}
\end{figure}

In what follows, the effects of control parameters $\alpha$ and $\beta$ on the formation tracking control performance are investigated. In order to measure the formation tracking control performance more intuitively, let $||e(t)||=\sum_{i=1}^{4}||e_i(t)||$ denote the total formation tracking control error of the whole system.  Choose three control parameter pairs: (1) $\alpha=2,
\beta=1;$ (2) $\alpha=4, \beta=1;$ (3) $\alpha=2, \beta=4$. The formation tracking control errors in various settings with different control parameter pairs are shown in Figure \ref{fig_9}. It is obvious that the time-varying formation tracking errors can converge to 0 faster with smaller $\alpha$ and larger $\beta$. Following \eqref{e25}, it can be proved that $|Re(\lambda_2)|$ is larger with smaller $\alpha$ and larger $\beta$. Thus, with smaller $\alpha$ and larger $\beta$, the formation tracking control errors converge to 0 faster.

\section{Conclusions}\label{sec:6}
This article investigates the time-varying formation tracking control of wheeled robots with region constraint using the generalized Udwadia-Kalaba formulation. The communication topology is modeled as a weighted directed graph containing a spanning tree. The time-varying formation tracking control objective is transformed to a constraint equation and the region constraint is transformed through an appropriate diffeomorphism. The time-varying formation tracking controller with region constraint is designed under the generalized Udwadia-Kalaba framework. Under the time-varying formation tracking controller, the desired time-varying formation tracking with collision avoidance from the boundaries can be achieved. Finally, the theoretical results are illustrated by some numerical simulations. Future work will investigate the optimization for the time-varying formation tracking control of wheeled mobile robots.

\section*{Acknowledgments}
This work was supported in part by the National Nature Science Foundation of China under Grant 12572004 and 12172020; in part by the Young Elite Scientists Sponsorship Program by CAST under Grant 2022QNRC001; in part by National Key R\&D Program of China: Gravitational Wave Detection Project (No.2024YFC2207900); and in part by the 111 Center under Grant B18002.

\bibliographystyle{unsrt}  
\bibliography{references}

\end{document}